\documentclass{article}

\usepackage{microtype}
\usepackage{graphicx}
\usepackage{subcaption}
\usepackage{booktabs}
\usepackage{amsmath}
\usepackage{amssymb}
\usepackage{mathtools}
\usepackage{algorithm}
\usepackage{algorithmic}
\usepackage{amsthm}
\usepackage{url}
\usepackage{xcolor}
\usepackage{hyperref}
\usepackage[capitalize,noabbrev]{cleveref}

\def\isaccepted{1}
\IfFileExists{icml2026_weightsymmetry.sty}{
  \usepackage[accepted]{icml2026_weightsymmetry}
}{
  \usepackage[accepted]{icml2026}
}
\makeatletter
\renewcommand{\Notice@String}{\textit{Accepted at the ICML 2026 Workshop on Weight-Space Symmetries: from Foundations to Practical Applications.}}
\makeatother
\definecolor{ruleNavy}{RGB}{8,35,81}

\theoremstyle{plain}
\newtheorem{mpproposition}{Proposition}
\newtheorem{mpcorollary}{Corollary}
\crefname{mpproposition}{Proposition}{Propositions}
\Crefname{mpproposition}{Proposition}{Propositions}
\crefname{mpcorollary}{Corollary}{Corollaries}
\Crefname{mpcorollary}{Corollary}{Corollaries}

\icmltitlerunning{Access Sets Matter: Budgeting Expert Reads for Scalable Weight-Space Model Merging}

\begin{document}

\twocolumn[
\icmltitle{Access Sets Matter: Budgeting Expert Reads for Scalable \\ Weight-Space Model Merging}

\begin{icmlauthorlist}
  \icmlauthor{Yuanyi Wang}{polyu}
  \icmlauthor{Yanggan Gu}{polyu}
  \icmlauthor{Su Lu}{polyu}
  \icmlauthor{Yifan Yang}{polyu} \\
  \icmlauthor{Zhaoyi Yan}{infix}
  \icmlauthor{Congkai Xie}{infix}
  \icmlauthor{Jianmin Wu}{polyu}
  \icmlauthor{Hongxia Yang}{polyu,infix,dayabay}
\end{icmlauthorlist}
\icmlaffiliation{polyu}{The Hong Kong Polytechnic University, PolyU}
\icmlaffiliation{infix}{InfiX.ai}
\icmlaffiliation{dayabay}{Hong Kong Polytechnic University Daya Bay Technology and Innovation Research Institute}
\icmlcorrespondingauthor{Yuanyi Wang}{wangyuanyi713@gmail.com}
\icmlcorrespondingauthor{Hongxia Yang}{hongxia.yang@polyu.edu.hk}
\icmlkeywords{model merging, LLM systems, parameter-efficient adaptation}

\begin{center}
\textbf{Code:} \textcolor{ruleNavy}{\href{https://github.com/wyy-code/mergepipe}{\texttt{wyy-code/mergepipe}}}
\end{center}

\vskip 0.3in
]

\printAffiliationsAndNotice{Corresponding author: Hongxia Yang.}

\begin{abstract}
Weight-space model merging is usually formulated as an algebraic operation on checkpoints, yet at LLM scale the limiting resource is often the set of expert weights that must be read.
We introduce \textbf{MergePipe}, a budget-aware execution layer that casts LLM merging as an \emph{expert access-set} problem: given a merge operator and a checkpoint family in a shared weight coordinate system, choose which expert delta blocks to access under an explicit I/O budget. 
MergePipe indexes parameter blocks, builds deterministic access plans, and executes the induced budgeted merge with replayable manifests. The plan is budget-sound by construction and recovers the full-read merge at full budget; for fixed-coefficient additive operators, the omitted-update error is bounded by the norm of omitted deltas. 
Across Qwen and Llama merging workloads, MergePipe reduces expert-read I/O by up to an order of magnitude and achieves up to $11\times$ speedups. 
Representative budget sweeps show $O(10^{-3})$ parameter deviation from full-read merges and no monotonic degradation on downstream benchmarks.
\end{abstract}

\section{Introduction}
\label{sec:intro}
xxx
Modern LLM development increasingly produces \emph{checkpoint families}: a shared base model, instruction-tuned variants, domain experts, and adapter or delta updates \citep{wortsman2022model,yadav2023ties}. 
These families are becoming weight-space datasets, and model merging offers a post-training primitive for consolidating them into one deployable model without ensembling or another full training run \citep{yang2024model,lu2024merge,wang2025model}.

Most merging work asks how task vectors, signs, sparsity masks, or low-rank updates should be combined. 
For LLM-scale checkpoint families, an equally important execution question appears: \emph{which expert parameters must be read to realize a merge?} 
Naive merge scripts treat checkpoints as opaque files, scan all expert parameters, apply AVG \cite{wortsman2022model}, TIES \cite{yadav2023ties}, DARE-like rules \cite{yu2024language}, and write the output. As the expert pool grows, this expert-read term scales nearly linearly in the number of experts $K$, making iterative merging I/O-bound rather than compute-bound.

MergePipe assumes the standard homologous setting where experts are fine-tuned from a shared base, or have already been aligned into a common weight coordinate system.
It is therefore complementary to symmetry- or permutation-alignment methods: after weights live in one coordinate chart, MergePipe asks which expert deltas should be accessed under a finite budget.

Our thesis is that LLM-scale merging needs a \emph{weight-access abstraction}, not only better merge rules. 
MergePipe treats expert deltas as a budgeted resource and decouples \emph{which} blocks are accessed from \emph{how} accessed deltas are combined. 
Full-read merging fixes access mask $A=\mathbf{1}$; budgeted merging chooses an access mask $A$ under cost $C_{\mathrm{expert}}(A)\le B$ and executes the induced mask-aware operator $\Psi_{\mathrm{op}}$. 
Omitted entries are encoded by the mask and do not trigger storage reads. 
Thus, full-budget execution recovers the standard merge, while lower budgets define an explicit approximation and expose a speed--fidelity frontier.

Our contributions are as follows:
\textbf{(i)} We introduce expert access sets as a budgeted object for weight-space model merging.
\textbf{(ii)} We prove budget soundness, full-budget consistency, and an omitted-update bound for additive merges.
\textbf{(iii)} We instantiate this abstraction in MergePipe, achieving up to order-of-magnitude expert-I/O reduction and $11\times$ speedups on Qwen and Llama checkpoint families, with representative budget sweeps preserving downstream behavior.

\section{Budgeted Access Sets}
\label{sec:cost}

Consider merging a base model $M_0$ with experts $\{M_i\}_{i=1}^{K}$. Let $\mathcal{T}$ be the tensor set and $\mathcal{B}_t$ the deterministic blocks of tensor $t$. For block $(t,b)$, let $M_0[t,b]$ be the base block and $\Delta_{i,t,b}=M_i[t,b]-M_0[t,b]$ expert $i$'s delta. A full-read merge computes
\begin{equation}
  M_{\mathrm{full}}[t,b]=M_0[t,b]+
  \Phi_{\mathrm{op},t,b}\!\left(
  \Delta_{1,t,b},\ldots,\Delta_{K,t,b};\omega\right),
  \label{eq:local-merge}
\end{equation}
where $\Phi_{\mathrm{op}}$ is AVG, TIES, DARE, or another weight-space operator;
$\omega$ denotes the mask for randomized operators.

MergePipe exposes the hidden execution choice with an access mask
\begin{equation}
  A\in\{0,1\}^{|\mathcal{U}|},\quad
  A_{i,t,b}=1\Leftrightarrow
  \text{read expert }i\text{ for }(t,b),
\end{equation}
where $\mathcal{U}=\{(i,t,b):i\in[K],t\in\mathcal{T},b\in\mathcal{B}_t\}$. Each unit has physical read cost $c_{i,t,b}\ge0$, measured in the executor's accounting unit. The controllable expert-read cost is
\begin{equation}
  C_{\mathrm{expert}}(A)=
  \sum_{(i,t,b)\in\mathcal{U}}A_{i,t,b}c_{i,t,b},
  \label{eq:mask-cost}
\end{equation}
while base reads and output writes are checkpoint-boundary costs. The budgeted merge is
\begin{equation}
  M_A[t,b]=M_0[t,b]+
  \Psi_{\mathrm{op},t,b}
  \left(A_{\cdot,t,b},\Delta_{\cdot,t,b};\omega\right),
  \label{eq:budgeted-merge}
\end{equation}
with full-budget consistency
\begin{equation}
  \Psi_{\mathrm{op},t,b}
  (\mathbf{1},\Delta_{\cdot,t,b};\omega)
  =
  \Phi_{\mathrm{op},t,b}
  (\Delta_{1,t,b},\ldots,\Delta_{K,t,b};\omega).
  \label{eq:psi-consistency}
\end{equation}
We require $\Psi_{\mathrm{op}}$ to be \emph{non-anticipatory}: if two delta tuples agree on all selected entries $\{i:A_{i,t,b}=1\}$, they produce the same budgeted output. Hence omitted entries are represented only by the mask, not by reading their contents. Offline \textsc{Analyze} reads used to build sketches or norms are amortized catalog construction; if performed inside a merge run, they are counted in $C_{\mathrm{expert}}^{\mathrm{run}}$.

Planning can be written as the idealized access-set objective
\begin{equation}
  A^\star=\arg\max_{A}\sum_{i,t,b}A_{i,t,b}s(i,t,b)
  \quad\mathrm{s.t.}\quad C_{\mathrm{expert}}(A)\le B ,
  \label{eq:access-objective}
\end{equation}
where $s(i,t,b)$ comes from norms, sketches, coverage, or fallback metadata. MergePipe implements this objective with deterministic greedy or score-per-byte heuristics rather than exact knapsack optimization.

\begin{mpproposition}[Budgeted execution invariant]
\label{prop:budget-soundness}
Let $A$ be an access mask with $C_{\mathrm{expert}}(A)\le B$.
Assume the planner and executor use the same nonnegative read costs $c_{i,t,b}$, and the charged execution trace reads only selected expert blocks, i.e., $N^{\mathrm{run}}_{i,t,b}\le A_{i,t,b}$.
Then
\begin{equation}
C_{\mathrm{expert}}^{\mathrm{run}}(A)
=\sum_{i,t,b}N^{\mathrm{run}}_{i,t,b}c_{i,t,b}
\le C_{\mathrm{expert}}(A)\le B .
\end{equation}
Moreover, if $A=\mathbf{1}$, then the budgeted operator recovers the full-read merge, $M_A=M_{\mathrm{full}}$; for randomized operators, this equality is pathwise under the same seed or mask $\omega$.
\end{mpproposition}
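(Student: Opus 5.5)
The argument splits into two parts: the cost inequality and full-budget consistency. Both follow directly from the definitions in this section.

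For the cost part, compare the run cost with the planned cost term by term over the finite index set $\mathcal{U}$. By hypothesis $N^{\mathrm{run}}_{i,t,b}\le A_{i,t,b}$ for every unit, and $c_{i,t,b}\ge 0$. Multiplying an inequality by a nonnegative scalar preserves it, so
\[
N^{\mathrm{run}}_{i,t,b}c_{i,t,b}\le A_{i,t,b}c_{i,t,b}
\]
for each unit. Summing over $(i,t,b)\in\mathcal{U}$ gives $C^{\mathrm{run}}_{\mathrm{expert}}(A)\le C_{\mathrm{expert}}(A)$ by the definition \eqref{eq:mask-cost}. This step requires that the planner and the executor use the \emph{same} costs $c_{i,t,b}$, which is an explicit assumption. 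Chaining with the assumed $C_{\mathrm{expert}}(A)\le B$ gives the bound.

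I will also note that the hypothesis $N^{\mathrm{run}}\le A$ is exactly what non-anticipatory execution of $\Psi_{\mathrm{op}}$ licenses. Since $\Psi_{\mathrm{op}}$ depends only on selected entries, the executor never needs to read a unit with $A_{i,t,b}=0$. For binary masks this means $N^{\mathrm{run}}_{i,t,b}=0$ whenever $A_{i,t,b}=0$, and at most one charged read otherwise. If the trace could re-read a block, $N^{\mathrm{run}}_{i,t,b}$ would count reads and could exceed $1$; the assumption $N^{\mathrm{run}}\le A$ rules this out, for example by caching within a run. I would state this convention explicitly, since it is the only place where the proposition's modelling could fail.

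For full-budget consistency, fix a block $(t,b)$ and set $A=\mathbf{1}$, so $A_{\cdot,t,b}=\mathbf{1}$. Then \eqref{eq:budgeted-merge} gives
\[
M_A[t,b]=M_0[t,b]+\Psi_{\mathrm{op},t,b}(\mathbf{1},\Delta_{\cdot,t,b};\omega).
\]
By \eqref{eq:psi-consistency}, this equals $M_0[t,b]+\Phi_{\mathrm{op},t,b}(\Delta_{1,t,b},\ldots,\Delta_{K,t,b};\omega)$, which is $M_{\mathrm{full}}[t,b]$ by \eqref{eq:local-merge}. Because the blocks $\mathcal{B}_t$ deterministically partition each tensor, blockwise equality for all $t\in\mathcal{T}$ and all $b\in\mathcal{B}_t$ gives $M_A=M_{\mathrm{full}}$.

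For randomized operators, the consistency identity \eqref{eq:psi-consistency} is stated for a fixed $\omega$ on both sides. Hence the equality holds for each realization $\omega$, that is, pathwise, provided both runs use the same seed or mask. There is no real obstacle here. The only subtlety is making clear that this is a pointwise-in-$\omega$ identity rather than equality in distribution.
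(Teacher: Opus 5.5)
Your proposal is correct and follows the paper's proof essentially step for step: you compare termwise, $N^{\mathrm{run}}_{i,t,b}c_{i,t,b}\le A_{i,t,b}c_{i,t,b}$ using $c_{i,t,b}\ge 0$, sum, and chain with $C_{\mathrm{expert}}(A)\le B$; you then substitute $A=\mathbf{1}$ blockwise into \eqref{eq:budgeted-merge} and \eqref{eq:psi-consistency}, with pathwise equality for a fixed $\omega$. Your remark that non-anticipatory execution motivates $N^{\mathrm{run}}\le A$ is harmless commentary, since the proof uses that inequality only as a hypothesis.
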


\begin{mpcorollary}[Expert-read fraction under fixed absolute budget]
\label{cor:expert-scaling}
If full-read merging reads all $K$ expert checkpoints with average expert-read cost $\bar{C}>0$, then $C_{\mathrm{expert}}^{\mathrm{full}}(K)=K\bar{C}$.
Under a fixed absolute expert-read budget $B$,
\begin{equation}
\frac{C_{\mathrm{expert}}^{\mathrm{run}}(A)}
{C_{\mathrm{expert}}^{\mathrm{full}}(K)}
\le \frac{B}{K\bar{C}} .
\end{equation}
Thus, when $\bar{C}$ is bounded below and $B$ does not scale with $K$, the expert-read fraction decreases as $O(1/K)$.
\end{mpcorollary}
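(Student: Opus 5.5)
The corollary follows by dividing the bound of \cref{prop:budget-soundness} by the full-read cost, so the plan is short.

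First, we pin down the denominator. Full-read merging corresponds to the access mask $A=\mathbf{1}$, so by \eqref{eq:mask-cost}
\[
C_{\mathrm{expert}}^{\mathrm{full}}(K)=\sum_{i=1}^{K}\sum_{t,b}c_{i,t,b}.
\]
We define the per-expert cost $\bar{C}_i=\sum_{t,b}c_{i,t,b}$ and the average $\bar{C}=\frac1K\sum_i \bar{C}_i$. Then $C_{\mathrm{expert}}^{\mathrm{full}}(K)=K\bar{C}$ holds by definition of the average. This identity is bookkeeping, not a modelling assumption.

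Second, we bound the numerator. We invoke \cref{prop:budget-soundness} under the same hypotheses: the planner and executor share nonnegative costs, $N^{\mathrm{run}}\le A$, and $C_{\mathrm{expert}}(A)\le B$. This gives $C_{\mathrm{expert}}^{\mathrm{run}}(A)\le B$. Since $\bar{C}>0$, the denominator $K\bar{C}$ is strictly positive. Dividing both sides by it preserves the inequality and yields the displayed bound
\[
\frac{C_{\mathrm{expert}}^{\mathrm{run}}(A)}{C_{\mathrm{expert}}^{\mathrm{full}}(K)}\le \frac{B}{K\bar{C}}.
\]

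Third, we read off the asymptotic claim. Suppose $\bar{C}\ge c_0>0$ uniformly in $K$ and $B$ is a constant independent of $K$. Then
\[
\frac{B}{K\bar{C}}\le \frac{B}{c_0}\cdot\frac1K,
\]
so the expert-read fraction is $O(1/K)$. The only step needing care is making explicit that ``$\bar{C}$ bounded below'' means a uniform positive lower bound over the family of $K$. Without such a bound, $\bar{C}$ could shrink with $K$ and cancel the $1/K$ decay. We would state this assumption explicitly rather than treat it as an obstacle. We would also remark that the bound is only an upper bound: the fraction may decrease faster, for instance when $B\ge K\bar{C}$ is never reached.
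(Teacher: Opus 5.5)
Your proposal is correct and follows the paper's proof. Both obtain $C_{\mathrm{expert}}^{\mathrm{run}}(A)\le B$ from \Cref{prop:budget-soundness} and divide by $K\bar{C}>0$; deriving $C_{\mathrm{expert}}^{\mathrm{full}}(K)=K\bar{C}$ from the mask cost at $A=\mathbf{1}$ and stating a uniform lower bound on $\bar{C}$ are harmless elaborations of what the paper takes as given, and the closing aside about ``$B\ge K\bar{C}$ never being reached'' is muddled but not part of the argument.
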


\begin{mpproposition}[Additive omission bound]
\label{prop:additive-bound}
Assume blocks form a disjoint partition of the parameter vector.
Suppose the budgeted additive operator uses fixed coefficients independent of $A$:
\begin{equation}
\Psi_{\mathrm{add},t,b}(A_{\cdot,t,b},\Delta_{\cdot,t,b})
\!=
\sum_{i=1}^{K}\alpha_{i,t,b}A_{i,t,b}\Delta_{i,t,b}.
\end{equation}
Then
\begin{equation}
M_{\mathrm{full}}[t,b]-M_A[t,b]
\!=
\sum_{i=1}^{K}(1-A_{i,t,b})\alpha_{i,t,b}\Delta_{i,t,b}.
\end{equation}
Let $q_{t,b}(A)=\sum_i(1-A_{i,t,b})|\alpha_{i,t,b}|\|\Delta_{i,t,b}\|_2$. Then
\begin{equation}
\|M_{\mathrm{full}}-M_A\|_2
\le
\left[\sum_{t,b}q_{t,b}(A)^2\right]^{1/2}
\le
\sum_{t,b}q_{t,b}(A).
\label{eq:omission-bound}
\end{equation}
where $M_{\mathrm{full}}$ denotes the same additive operator with $A=1$.
\end{mpproposition}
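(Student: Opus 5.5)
The plan is to work block by block and then assemble the blocks using the disjoint-partition assumption. First, fix a block $(t,b)$. Specializing the budgeted merge \eqref{eq:budgeted-merge} to the additive operator gives $M_A[t,b]=M_0[t,b]+\sum_i \alpha_{i,t,b}A_{i,t,b}\Delta_{i,t,b}$. Because the coefficients $\alpha_{i,t,b}$ do not depend on $A$, the full-read merge is the same expression evaluated at $A=\mathbf{1}$. By full-budget consistency \eqref{eq:psi-consistency}, this is $M_0[t,b]+\sum_i\alpha_{i,t,b}\Delta_{i,t,b}$. Subtracting the two expressions, the base block cancels and the selected terms cancel pairwise. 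What remains is the stated identity $M_{\mathrm{full}}[t,b]-M_A[t,b]=\sum_i(1-A_{i,t,b})\alpha_{i,t,b}\Delta_{i,t,b}$.

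Second, bound each block by the triangle inequality and absolute homogeneity of $\|\cdot\|_2$. Since $1-A_{i,t,b}\in\{0,1\}$ is nonnegative, this gives $\|M_{\mathrm{full}}[t,b]-M_A[t,b]\|_2\le\sum_i(1-A_{i,t,b})|\alpha_{i,t,b}|\|\Delta_{i,t,b}\|_2=q_{t,b}(A)$.

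Third, pass to the whole parameter vector. The blocks partition the coordinates disjointly, so the squared Euclidean norm of the full difference equals the sum of the squared block norms, $\|M_{\mathrm{full}}-M_A\|_2^2=\sum_{t,b}\|M_{\mathrm{full}}[t,b]-M_A[t,b]\|_2^2$. This is where the disjointness hypothesis is used: overlapping blocks would double count coordinates. Applying the blockwise bound and taking square roots gives the first inequality in \eqref{eq:omission-bound}. For the second inequality, note that all $q_{t,b}(A)\ge0$, so $\left(\sum q_{t,b}\right)^2=\sum q_{t,b}^2+\text{(nonnegative cross terms)}\ge\sum q_{t,b}^2$. Equivalently, $\|x\|_2\le\|x\|_1$ for the vector $(q_{t,b})$.

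None of these steps is a real obstacle. The only point that needs care is the first step: the claimed error identity requires that $M_{\mathrm{full}}$ be the same additive operator at $A=\mathbf{1}$ with $A$-independent coefficients. That is exactly what the hypothesis provides. If the coefficients were renormalized over the selected set, as in a masked average, the difference would not reduce to the omitted terms alone. I would point this out to explain why the assumption is needed.
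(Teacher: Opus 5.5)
Your proposal is correct and follows essentially the same argument as the paper: subtract blockwise to get the omitted-term identity, bound each block by $q_{t,b}(A)$ via the triangle inequality, use disjointness of the blocks to sum squared block norms, and finish with $\|\cdot\|_2\le\|\cdot\|_1$ on the nonnegative vector $(q_{t,b}(A))$. The only cosmetic difference is that you identify $M_{\mathrm{full}}$ through full-budget consistency, whereas the paper writes it directly as the additive operator at $A=\mathbf{1}$, which is how the statement itself defines it.
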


\Cref{prop:additive-bound} gives a simple justification for norm- or sketch-based access scores in fixed-coefficient additive merges.
With $u_{i,t,b}=|\alpha_{i,t,b}|\|\Delta_{i,t,b}\|_2$, minimizing the looser $\ell_1$ omission bound is equivalent to maximizing retained utility $\sum_{i,t,b}A_{i,t,b}u_{i,t,b}$ under the expert-read budget.
This fidelity bound does not apply to selected-only renormalization, TIES, or DARE, where access masks can change coefficients, random drops, sparsification, or sign election; for these operators, MergePipe provides the same budgeted execution abstraction, while fidelity is evaluated empirically.

\textbf{Remark:} Proofs are provided in Appendix~\ref{app:proofs}.

\section{MergePipe}
\label{sec:method}

\begin{figure*}[t]
  \centering
  \includegraphics[width=0.92\textwidth]{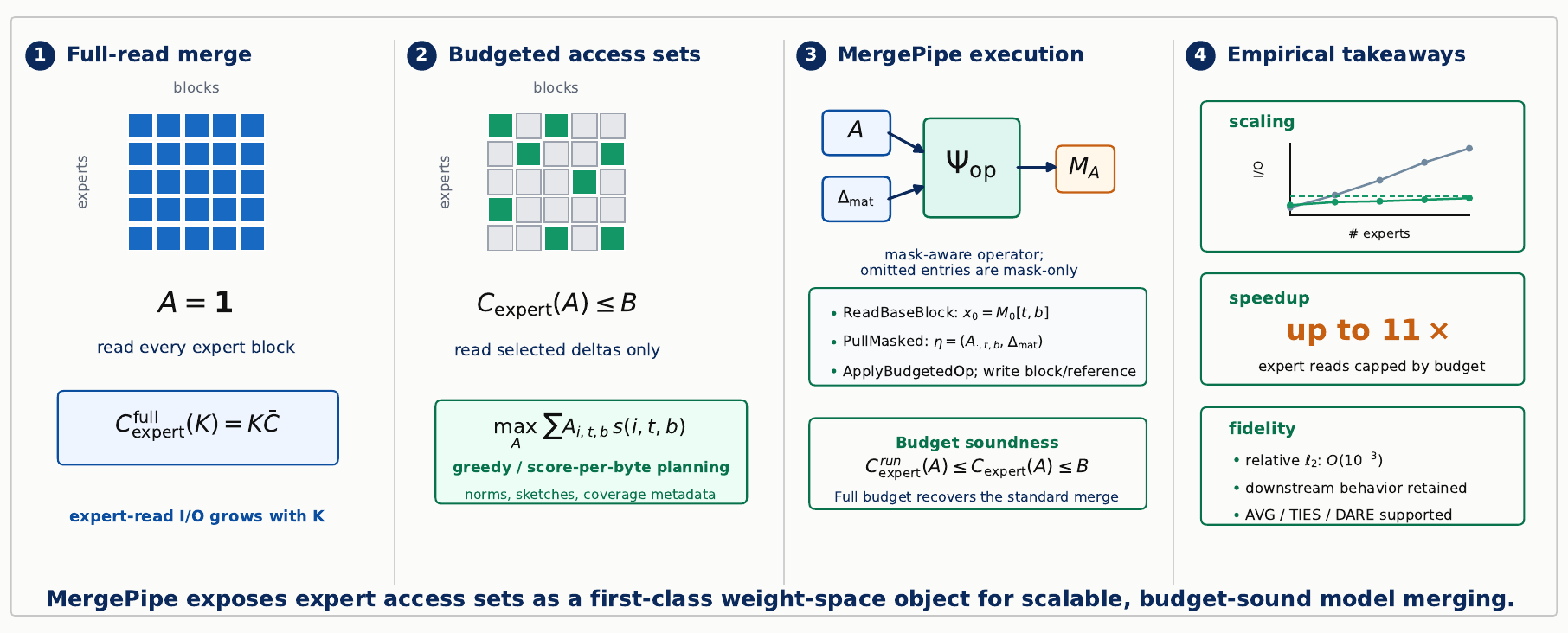}
  \caption{\textbf{Budgeted access sets in weight space.}
  Full-read merging fixes $A=\mathbf{1}$.
  MergePipe chooses a budget-feasible access mask $A$ and executes the induced mask-aware operator $\Psi_{\mathrm{op}}$; omitted entries are represented by the mask and do not trigger expert reads.}
  \label{fig:access-set}
  \vspace{-1em}
\end{figure*}

MergePipe realizes the access-mask abstraction through a catalog--plan--execute loop; detailed algorithms are in Appendix~\ref{app:algorithms}. 
Given a base checkpoint, expert pool, merge operator, and expert-I/O budget, it returns a logical merged checkpoint and a replayable manifest recording the access mask, plan hash, touched blocks, realized expert reads, and lineage. 
Figure~\ref{fig:access-set} illustrates how budgeted access sets are planned and executed in weight space.
More details of the system are shown in Figure~\ref{fig:system-overview-appendix}.

\textbf{Catalog.}
MergePipe exposes checkpoints as block-structured weight data rather than opaque files. 
For each tensor block, the catalog stores size, layout, hashes, and lightweight statistics such as sketches or coverage hints. 
These reusable metadata allow cost estimation and access planning without repeatedly scanning all expert checkpoints.

\textbf{Planner.}
The planner constructs an access mask $A$ satisfying $C_{\mathrm{expert}}(A)\le B$. 
It ranks candidate expert deltas using catalog statistics and selects blocks under the expert-I/O budget, with deterministic tensor-level fallback when block metadata are missing. 
The planner does not introduce a new merge rule: it decides which deltas are physically materialized, while the requested operator is applied through its mask-aware instantiation $\Psi_{\mathrm{op}}$, recovering the standard full-read operator at $A=\mathbf{1}$.

\textbf{Executor.}
The executor streams base blocks in checkpoint order and uses \texttt{DeltaIterator} to materialize only selected expert deltas from full checkpoints, explicit deltas, or LoRA-style adapters \citep{hu2022lora}. 
Omitted entries are passed as mask values rather than storage reads. 
For each block, MergePipe applies $\Psi_{\mathrm{op}}$ and records touched blocks, contributing experts, and realized I/O in the manifest. 
Our comparisons use full logical checkpoint materialization and isolate expert-read I/O before optional overlay optimization.

\begin{algorithm}[t]
\caption{\textsc{PlanGen}: greedy budget-aware plan}
\label{alg:plangen-main}
\begin{algorithmic}[1]
\REQUIRE {\small Experts $\{M_i\}_{i=1}^{K}$, catalog $\mathcal{C}$, operator $\mathrm{op}$, budget $B$}
\ENSURE Budget-feasible merge plan $\pi$
\STATE Build candidate expert-block set $\mathcal{Q}=\{(i,t,b)\}$ from block metadata.
\STATE Score candidates $s(i,t,b)$ using norms, sketches, coverage, or deterministic fallback metadata.
\STATE Sort $\mathcal{Q}$ by decreasing score with stable tensor/block tie-breaking.
\STATE $\mathcal{R}_{\pi}\leftarrow\emptyset$, $\widehat{C}\leftarrow0$.
\FOR{candidate $(i,t,b)\in\mathcal{Q}$}
  \IF{$\widehat{C}+c_{i,t,b}\le B$}
    \STATE $\mathcal{R}_{\pi}\leftarrow\mathcal{R}_{\pi}\cup\{(i,t,b)\}$; $\widehat{C}\leftarrow \widehat{C}+c_{i,t,b}$.
  \ENDIF
\ENDFOR
\STATE Record operator parameters, traversal order, selected-block digest, and $\widehat{C}$.
\STATE \textbf{return} $\pi=(\mathrm{op},\theta,\mathcal{R}_{\pi},\mathrm{order})$.
\end{algorithmic}
\end{algorithm}

\begin{algorithm}[t]
\caption{\textsc{ExecuteMerge}: \\ budget-enforced streaming execution}
\label{alg:execute-main}
\small
\begin{algorithmic}[1]
\REQUIRE Plan $\pi$, base $M_0$, experts $\{M_i\}$, storage $\mathcal{S}$, catalog $\mathcal{C}$, transaction manager $\mathcal{T}$
\ENSURE Snapshot id $sid$ and manifest $\mathsf{man}$
\STATE $\mathcal{T}.\textsc{Begin}()$; open staging writer $w$; initialize touch and coverage maps.
\FOR{tensor $t$ in $\pi.\textsc{TensorOrder}()$}
  \STATE Initialize \texttt{DeltaIterator} $D$ for $t$.
  \FOR{block $b$ in $\pi.\textsc{BlocksToMaterialize}(t)$}
    \STATE Read base block $x_0\leftarrow\mathcal{S}.\textsc{ReadBaseBlock}(M_0,t,b)$.
    \STATE $\eta\leftarrow D.\textsc{PullMasked}(b,\mathcal{R}_{\pi})$.
    \STATE $x\leftarrow\textsc{ApplyBudgetedOp}(x_0,\eta,\pi.\textsc{Op}(t,b))$.
    \STATE $w.\textsc{WriteBlockOrReference}(t,b,x,M_0)$; update touch and coverage.
  \ENDFOR
\ENDFOR
\STATE Validate hashes; build $\mathsf{man}\leftarrow\mathcal{C}.\textsc{BuildManifest}(\pi,\textit{touch},\textit{coverage})$.
\STATE $sid\leftarrow\mathcal{T}.\textsc{AtomicPublish}(w,\mathsf{man})$; $\mathcal{C}.\textsc{CommitRecord}(sid,\mathsf{man})$.
\STATE $\mathcal{T}.\textsc{Commit}()$; \textbf{return} $(sid,\mathsf{man})$.
\end{algorithmic}
\end{algorithm}

\vspace{-0.9em}
\section{Experiments}
\label{sec:experiments}

We evaluate the causal chain behind MergePipe: full-read expert access grows with $K$; budgeted access caps this growth; wall time follows expert-read I/O; and the resulting approximation remains useful downstream. Experiments cover Qwen3-0.6B/1.7B/8B, Llama-3.2-3B, and Llama-3.1-8B, with up to 20 experts for Qwen/Llama-8B and 25 for Llama-3.2-3B. Merges are CPU-only, use SSD-backed storage, and disable OS-level file caching unless stated otherwise. Baselines implement the same operators but scan all required expert checkpoints on every invocation. Appendix~\ref{app:additional} reports the budget sweep, I/O and overhead breakdowns, a compact operator table, and the full quality table.

\begin{figure*}[t]
  \centering
  \includegraphics[width=0.93\textwidth]{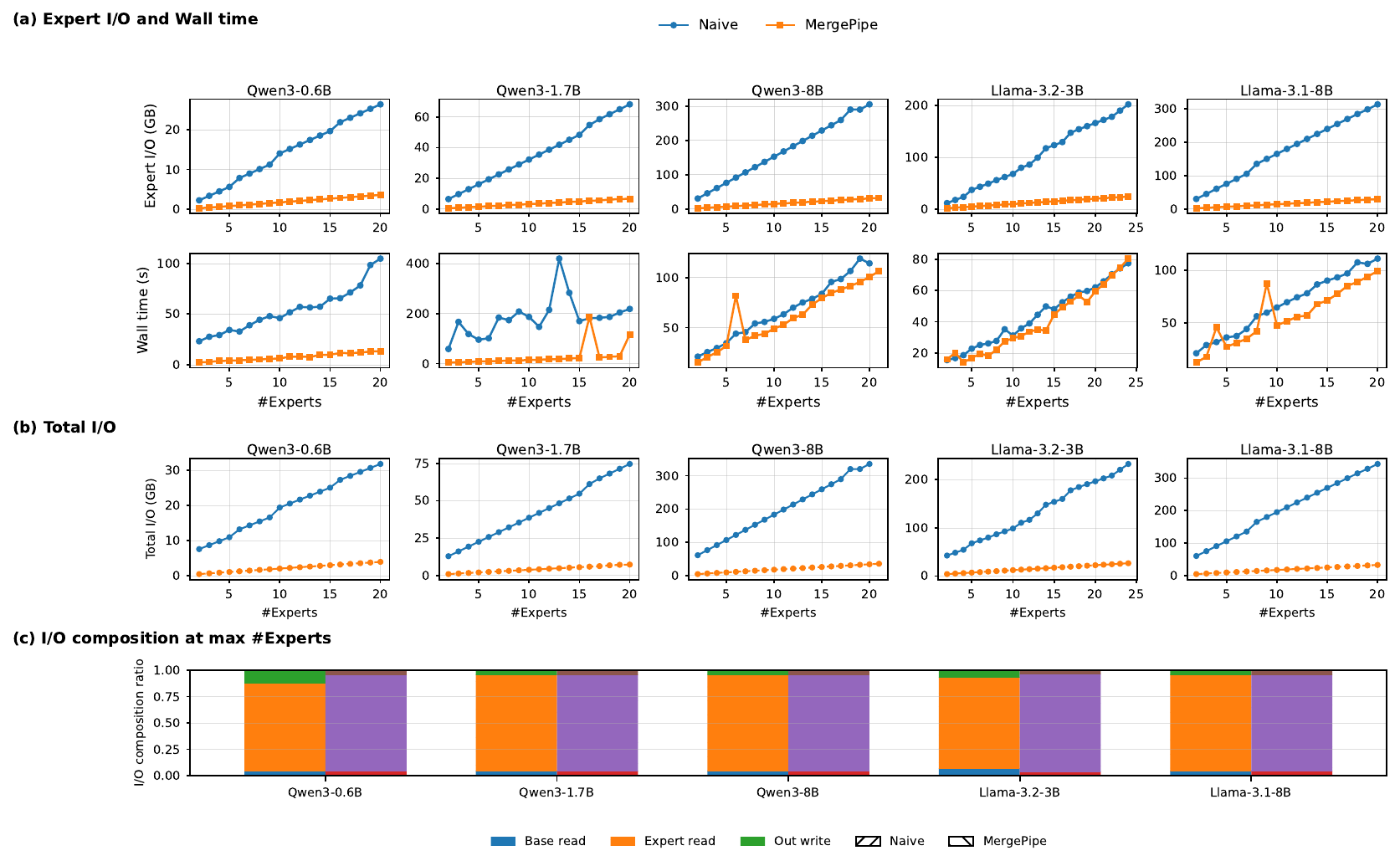}
  \caption{\textbf{Scaling with the number of experts.}
  Full-read merging repeatedly scans expert checkpoints, so expert-read I/O and wall time grow with $K$.
  MergePipe enforces a fixed expert-I/O budget, keeping expert reads nearly flat and shifting the remaining cost toward the unavoidable checkpoint boundary.}
  \label{fig:scaling}
  \vspace{-1.5em}
\end{figure*}

\vspace{-0.3em}
\textbf{Scaling and budget behavior.}
\Cref{fig:scaling} shows that naive merging has near-linear expert-I/O growth and matching wall-time growth as $K$ increases. MergePipe keeps the access set within $B$, making expert reads nearly flat and yielding order-of-magnitude expert-I/O reductions and up to $11\times$ speedups. Budget sweeps from 10\% to 100\% further show monotone realized expert-read I/O, accessed-block ratio, and wall-time growth (Appendix~\ref{app:budget}), making $B$ a direct throughput--fidelity knob.

\begin{figure}[t]
  \centering
  \includegraphics[width=0.99\linewidth]{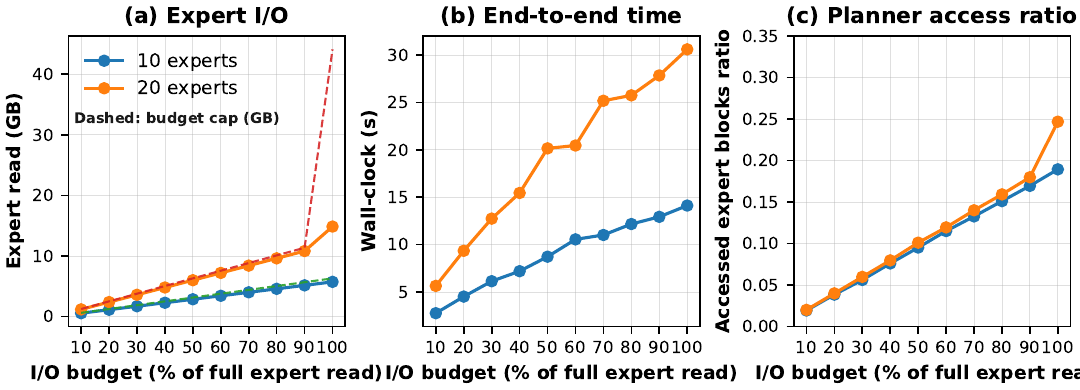}
  \caption{\textbf{Budget-aware planning behavior.}
  \textbf{(a)} Realized expert reads grow monotonically with the requested I/O budget and remain under the cap.
  \textbf{(b)} End-to-end wall time follows expert-read volume.
  \textbf{(c)} The fraction of accessed expert blocks expands smoothly as more budget is allocated.}
  \label{fig:budget-main}
  \vspace{-2.3em}
\end{figure}

\vspace{-0.3em}
\textbf{Operator generality.}
On Llama-3.1-8B, MergePipe reduces expert-read I/O across AVG, TIES, and DARE because access planning precedes local merge semantics. 
Sparse methods benefit most: at $K{=}8$, TIES I/O drops from 79.5GB to 3.46GB and wall time from 614s to 51s; at $K{=}20$, TIES still reads 3.46GB while the naive pipeline exceeds 174GB, with a 70.4\% wall-time reduction.

\begin{table}[t]
\centering
\small
\setlength{\tabcolsep}{3.5pt}
\caption{\textbf{Operator generality on Llama-3.1-8B.}
MergePipe applies the same access-budget abstraction to AVG, TIES, and DARE. I/O is reported in GB.}
\label{tab:operator-main}
\begin{tabular}{lccccc}
\toprule
Op. & $K$ & Naive I/O & MP I/O & Naive s / MP s & Reduc. \\
\midrule
AVG  & 8  & 70.7 & 3.46 & 136.2 / 64.1  & 53.0\% \\
AVG  & 20 & 171.0 & 3.46 & 363.0 / 338.4 & 6.8\% \\
TIES & 8  & 79.5 & 3.46 & 614.3 / 51.1  & 91.7\% \\
TIES & 20 & 174.4 & 3.46 & 1203.7 / 356.3 & 70.4\% \\
DARE & 8  & 79.5 & 3.46 & 99.1 / 44.7   & 54.9\% \\
DARE & 20 & 173.2 & 3.46 & 464.2 / 414.4 & 10.7\% \\
\bottomrule
\end{tabular}
\end{table}

\begin{table}[t]
\centering
\small
\setlength{\tabcolsep}{4.5pt}
\caption{\textbf{Fidelity under budgeted expert access.}
Qwen3-0.6B, TIES, $K{=}20$.
Budget is normalized to the full-read TIES endpoint; touched ratio is measured after TIES sparsification.}
\label{tab:fidelity}
\vspace{-0.3em}
\resizebox{\linewidth}{!}{
\begin{tabular}{cccccc}
\toprule
Budget & Touched & Rel. $\ell_2$ & HumanEval & IFEval & DROP \\
\midrule
1.0 & 0.247 & 0 & 39.71 & 68.56 & 66.81 \\
0.9 & 0.180 & $7.23{\times}10^{-4}$ & 42.68 & 68.82 & 67.23 \\
0.7 & 0.140 & $8.30{\times}10^{-4}$ & 37.80 & 68.47 & 66.26 \\
0.5 & 0.101 & $8.84{\times}10^{-4}$ & 40.24 & 68.11 & 66.00 \\
\bottomrule
\end{tabular}}
\vspace{-1.7em}
\end{table}

\vspace{-0.3em}
\textbf{Quality under bounded access.}
Budgeted access is a controlled approximation, so we measure both parameter deviation and downstream quality. \Cref{tab:fidelity} compares budgeted TIES outputs with the full-read output on Qwen3-0.6B with 20 experts. The full-read TIES endpoint has touched ratio below one because TIES itself sparsifies updates; the budget is normalized to this operator-induced access cost. Even at 0.5 budget, relative $\ell_2$ deviation remains $O(10^{-3})$. HumanEval \citep{chen2021evaluating}, IFEval \citep{zhou2023instruction}, and DROP \citep{dua2019drop} stay close to the full-read baseline and show no monotonic degradation, indicating a favorable speed--fidelity frontier in this setting.

\begin{table}[t]
\centering
\small
\setlength{\tabcolsep}{3.2pt}
% \vspace{-0.5em}
\caption{\textbf{Parameter deviation and downstream quality under different budgets} (Qwen3-0.6B, TIES, $K{=}20$).}
\label{tab:fidelity-full-main}
\resizebox{\linewidth}{!}{
\begin{tabular}{c|ccc|ccc}
\toprule
 & \multicolumn{3}{c|}{\textbf{Deviation from full-read merge}} &
 \multicolumn{3}{c}{\textbf{Downstream score}} \\
\cmidrule(lr){2-4} \cmidrule(lr){5-7}
Budget & Touched & Rel. $\ell_2$ & P95 block & HumanEval & IFEval & DROP \\
\midrule
1.0 & 0.247 & 0 & 0 & 39.71 & 68.56 & 66.81 \\
0.9 & 0.180 & $7.23{\times}10^{-4}$ & $3.66{\times}10^{-3}$ & 42.68 & 68.82 & 67.23 \\
0.8 & 0.159 & $7.77{\times}10^{-4}$ & $3.66{\times}10^{-3}$ & 36.59 & 68.94 & 67.08 \\
0.7 & 0.140 & $8.30{\times}10^{-4}$ & $3.98{\times}10^{-3}$ & 37.80 & 68.47 & 66.26 \\
0.6 & 0.119 & $8.30{\times}10^{-4}$ & $3.98{\times}10^{-3}$ & 37.37 & 67.87 & 66.10 \\
0.5 & 0.101 & $8.84{\times}10^{-4}$ & $3.98{\times}10^{-3}$ & 40.24 & 68.11 & 66.00 \\
\bottomrule
\end{tabular}}
\vspace{-1em}
\end{table}

\vspace{-0.3em}
\textbf{System overhead and scope.}
Planning and metadata are small compared with tensor streaming: in a Qwen3-0.6B, 16-expert run, planning takes 1.21s (about 1\% of execution), the manifest is 812KB, and catalog storage is 3.79\% of total I/O (Appendix~\ref{app:overhead}). MergePipe targets offline, iterative merging with many disk-resident experts; gains naturally shrink for small expert sets, dense full-read regimes, or GPU-resident in-memory fusion.

\begin{table}[t]
\centering
\small
\setlength{\tabcolsep}{4pt}
\caption{Execution and system costs (Qwen3-0.6B, 16 experts).}
\label{tab:execution-overheads-main}
\resizebox{\linewidth}{!}{
\begin{tabular}{lcc}
\toprule
Metric & Absolute value & Relative cost \\
\midrule
Planning time & 1.21s & 1.04\% of execution \\
Execution time & 116.62s & 100\% \\
Executed expert-read I/O & 23,349MB & 1.18$\times$ pre-plan \\
Metadata catalog size & 3,468MB & 3.79\% of total I/O \\
Execution manifest size & 812KB & $<$0.001\% \\
\bottomrule
\end{tabular}}
\vspace{-2em}
\end{table}

\section{Related Work}
\label{sec:related}

\textbf{Model merging.}
Weight-space merging methods combine checkpoints or task vectors into one deployable model. 
Recent surveys summarize the broader model-fusion and model-merging landscape
\citep{yang2024model,lu2024merge,zhou2025democratizing,zhou2026model}.
Dense methods include model soups \citep{wortsman2022model}; sparse or interference-aware methods include TIES, DARE, low-rank variants, and activation- or sensitivity-informed merging \citep{yu2024language,liu2025lore,nobari2025activation,liu2025sens}. 
Recent LLM work further studies post-merge feature calibration and quantization \citep{gu2026featcal,wang2026pmq}, continual post-training conflicts \citep{wang2026geometry}, and domain-specific expert composition in weight space \citep{wang2026discovering}. 
Together, these works study \emph{how} weights should be combined, adapted, or deployed. MergePipe is complementary: it studies \emph{which expert weights must be accessed} when checkpoints and expert pools are large.

\textbf{Machine Learning Management Systems.}
Large-scale LLM development produces many checkpoints, deltas, and merged variants, motivating systems for experiment tracking, artifact logging, versioning, workflow orchestration, and provenance~\citep{zaharia2018accelerating,barreto2024dvc,vadde2024devops,eggers2024automating,bux2015saasfee,green2007provenance,ruan2021lineagechain}.
These systems improve reproducibility and traceability, but they mainly manage checkpoints and pipelines rather than optimizing the parameter I/O pattern of LLM merging.
MergePipe addresses this bottleneck by treating parameters as block-level execution units, caching reusable tensor statistics, and planning merge execution under an explicit expert-read budget.
This enables predictable, budget-aware checkpoint access for large-scale LLM merging, complementing prior ML management systems.

\section{Conclusion}
\label{sec:conclusion}

We presented MergePipe, a budgeted access-set abstraction for scalable weight-space model merging. 
Our central observation is that, for LLM checkpoint families, merging is constrained not only by how expert deltas are combined, but also by which expert deltas are physically read. 
By making expert access a first-class budgeted resource, MergePipe separates the logical merge rule from its physical access pattern, provides budget-sound and full-budget-consistent execution, and exposes a practical speed--fidelity frontier. 
Across Qwen and Llama checkpoint families, this view turns full-read expert scans into bounded expert access, yielding large I/O and runtime reductions while retaining downstream behavior in representative budgeted merges. 
More broadly, our results suggest that as model families continue to grow, weight-space methods should be paired with execution layers that treat model weights as structured, budgeted data rather than opaque checkpoint files.

\vspace{-0.5em}
\section*{Acknowledgements}
\vspace{-0.5em}

This paper is fully supported by a grant from the Research Grants Council of the Hong Kong Special Administrative Region, China (Project No. T41-517/25-N).

\bibliography{ref}
\bibliographystyle{icml2026}

\newpage
\appendix

\section{Formalization and Algorithms}
\label{app:formal}

This section expands the formal model, merge-operator semantics, catalog schema, and execution algorithms used by MergePipe.

\Cref{fig:system-overview-appendix} gives the implementation view behind the access-set abstraction in the main text.
MergePipe indexes LLM checkpoints as block-level weight data, plans budget-feasible expert-delta access, executes the requested mask-aware merge operator, and publishes a logical checkpoint together with a replayable manifest.
This system view is complementary to the main formulation: the central object remains the expert access mask, while the runtime components make that mask executable at checkpoint scale.

\begin{figure*}[t]
  \centering
  \includegraphics[width=0.95\textwidth]{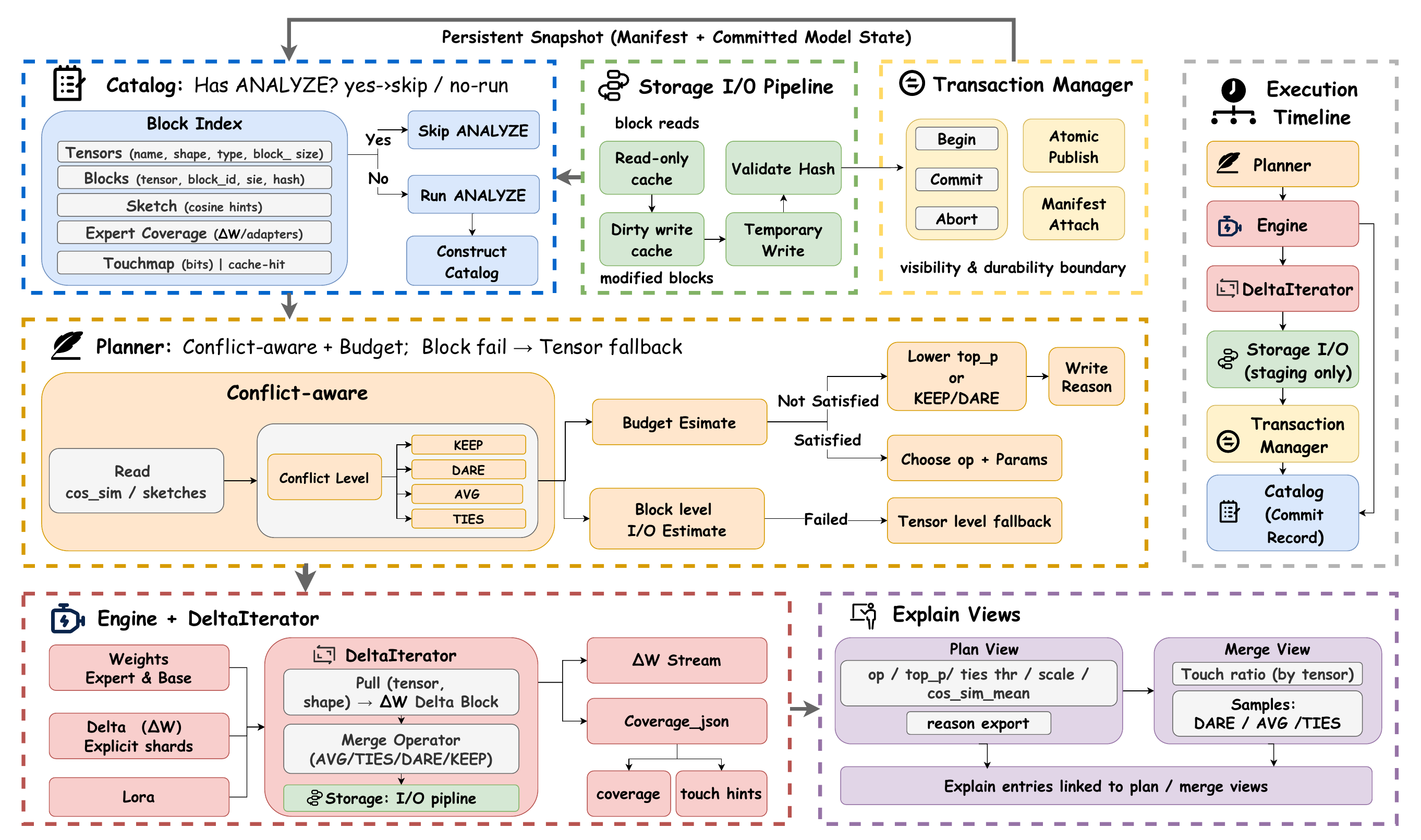}
  \caption{\textbf{MergePipe system overview.}
  The runtime realizes budget-aware weight-space merging through block-level cataloging, access-set planning, mask-aware execution, and manifest-based replay.
  The planner controls expert-delta reads under the I/O budget, while the executor streams only selected expert blocks and materializes the resulting logical checkpoint.}
  \label{fig:system-overview-appendix}
  % \vspace{-0.5em}
\end{figure*}

\subsection{Cost Accounting and Operator Semantics}
\label{app:operators-formal}

The total merge cost decomposes into base reads, expert reads, output writes, and metadata:
\begin{equation}
C_{\mathrm{merge}}=C_{\mathrm{base}}+C_{\mathrm{expert}}+C_{\mathrm{out}}+C_{\mathrm{meta}} .
\end{equation}
MergePipe constrains only the expert-read term. For a sparse plan $\mathcal{R}_{\pi}$, the induced access mask is
\begin{equation}
\begin{aligned}
A_{i,t,b}&=\mathbf{1}\{(i,t,b)\in\mathcal{R}_{\pi}\},
\\
\widehat{C}_{\mathrm{expert}}(\pi)&=\sum_{(i,t,b)\in\mathcal{R}_{\pi}}c_{i,t,b}\le B,
\end{aligned}
\end{equation}
where $c_{i,t,b}$ is the same physical accounting unit used by the executor. In uncompressed block-aligned runs, it equals the stored byte length of block $b$.

For fixed-coefficient additive operators, the mask can be implemented through a zero-completed tuple
\begin{equation}
\bar{\Delta}^{A}_{i,t,b}=A_{i,t,b}\Delta_{i,t,b} .
\end{equation}
The zero entries are logical handles and do not trigger storage reads. Average merging is then
\begin{equation}
\Psi_{\mathrm{AVG}}(A,\Delta)=\sum_{i=1}^{K}\alpha_i\bar{\Delta}^{A}_{i},
\qquad \sum_i\alpha_i=1.
\end{equation}
DARE applies the same idea before random drop/rescale, using a fixed seed or mask $\omega$ when compared to the full-read endpoint. TIES trims, elects signs, and averages sign-consistent entries after masking. Because access masks can alter random drops, top-$k$ sparsification, and sign election, we do not claim a global smooth error bound for TIES/DARE; instead we report parameter deviation and downstream quality empirically.

If an implementation renormalizes an additive merge over only selected experts, the omitted-delta bound in the main text no longer applies. Let the full-read additive merge use coefficients $\alpha_{i,t,b}$, and let the selected-only budgeted merge use coefficients $\beta_{i,t,b}(A)$ with $\beta_{i,t,b}(A)=0$ whenever $A_{i,t,b}=0$. Then
\begin{equation}
M_{\mathrm{full}}[t,b]-M_A[t,b]
=
\sum_i
(\alpha_{i,t,b}-\beta_{i,t,b}(A))\Delta_{i,t,b},
\end{equation}
and, defining
\begin{equation}
r_{t,b}(A)=\sum_i |\alpha_{i,t,b}-\beta_{i,t,b}(A)|\|\Delta_{i,t,b}\|_2,
\end{equation}
we have
\begin{equation}
\|M_{\mathrm{full}}-M_A\|_2
\le
\left[\sum_{t,b}r_{t,b}(A)^2\right]^{1/2}.
\end{equation}
This coefficient-drift form is the correct bound for selected-only averaging.

\subsection{Planning and Execution Sketches}
\label{app:algorithms}

The planning and execution sketches are included in the main text as \Cref{alg:plangen-main,alg:execute-main}.

\subsection{Proofs}
\label{app:proofs}

\begin{proof}[Proof of \Cref{prop:budget-soundness}]
For every expert block $(i,t,b)$, assumption (ii) gives $N^{\mathrm{run}}_{i,t,b}\le A_{i,t,b}$. Since $c_{i,t,b}\ge0$,
\begin{equation}
\begin{aligned}
C_{\mathrm{expert}}^{\mathrm{run}}(A)
&=\sum_{i,t,b}N^{\mathrm{run}}_{i,t,b}c_{i,t,b}
\le\sum_{i,t,b}A_{i,t,b}c_{i,t,b}
\\
&=C_{\mathrm{expert}}(A)
\le B.
\end{aligned}
\end{equation}
This proves budget soundness.
For full-budget consistency, suppose $B\ge C_{\mathrm{expert}}(\mathbf{1})$ and the planner selects $A=\mathbf{1}$. By \Cref{eq:psi-consistency}, for every block $(t,b)$,
\begin{align*}
M_A[t,b]
&=M_0[t,b]+\Psi_{\mathrm{op},t,b}(\mathbf{1},\Delta_{\cdot,t,b};\theta,\omega)\\
&=M_0[t,b]+\Phi_{\mathrm{op},t,b}(\Delta_{\cdot,t,b};\theta,\omega)
=M_{\mathrm{full}}[t,b].
\end{align*}
Thus $M_A=M_{\mathrm{full}}$ blockwise. For randomized operators, the equality is pathwise under the same seed or mask $\omega$; without fixing $\omega$, it is equality in distribution.
\end{proof}

\begin{proof}[Proof of \Cref{cor:expert-scaling}]
By \Cref{prop:budget-soundness}, $C_{\mathrm{expert}}^{\mathrm{run}}(A)\le B$. Full-read execution reads every expert and incurs $C_{\mathrm{expert}}^{\mathrm{full}}(K)=K\bar{C}$. Dividing by $K\bar{C}$ gives the claim.
\end{proof}

\begin{proof}[Proof of \Cref{prop:additive-bound}]
For the full-read and budgeted additive merges,
\begin{align*}
M_{\mathrm{full}}[t,b]
&=M_0[t,b]+\sum_{i=1}^{K}\alpha_{i,t,b}\Delta_{i,t,b},\\
M_A[t,b]
&=M_0[t,b]+\sum_{i=1}^{K}\alpha_{i,t,b}A_{i,t,b}\Delta_{i,t,b}.
\end{align*}
Subtracting gives
\[
M_{\mathrm{full}}[t,b]-M_A[t,b]
=\sum_{i=1}^{K}(1-A_{i,t,b})\alpha_{i,t,b}\Delta_{i,t,b}.
\]
Let $d_{t,b}=M_{\mathrm{full}}[t,b]-M_A[t,b]$. Since blocks are disjoint parameter coordinates,
\[
\|M_{\mathrm{full}}-M_A\|_2=\left(\sum_{t,b}\|d_{t,b}\|_2^2\right)^{1/2}.
\]
The triangle inequality gives $\|d_{t,b}\|_2\le q_{t,b}(A)$. Substitution yields the first inequality in \Cref{eq:omission-bound}; the second follows from $(\sum z_{t,b}^2)^{1/2}\le\sum z_{t,b}$ for nonnegative $z_{t,b}$.
\end{proof}

Atomic visibility follows from \Cref{alg:execute-main}: a run either publishes one snapshot/manifest pair or leaves no externally visible state.

\section{Additional Experimental Evidence}
\label{app:additional}

This supplement preserves the non-duplicated additional evidence most directly used by the main text: budget-controlled planning behavior and I/O/overhead decomposition. Baselines implement the same merge operator as MergePipe, use full-read expert access, and run with OS-level file caching disabled.

\textbf{Budget-Controlled Planning.}
\label{app:budget}
The budget-controlled planning results are included in the main text as \Cref{fig:budget-main}.

\textbf{I/O Breakdown and Overhead.}
\label{app:overhead}
\Cref{fig:io-overhead-appendix} shows that the gains come from reducing the expert-read term rather than from metadata effects. Base reads and output writes are checkpoint-boundary costs, while planning and transactional overhead remain small compared with tensor streaming.

\begin{figure}[t]
  \centering
  \begin{minipage}[t]{0.48\linewidth}
    \centering
    \includegraphics[width=\linewidth]{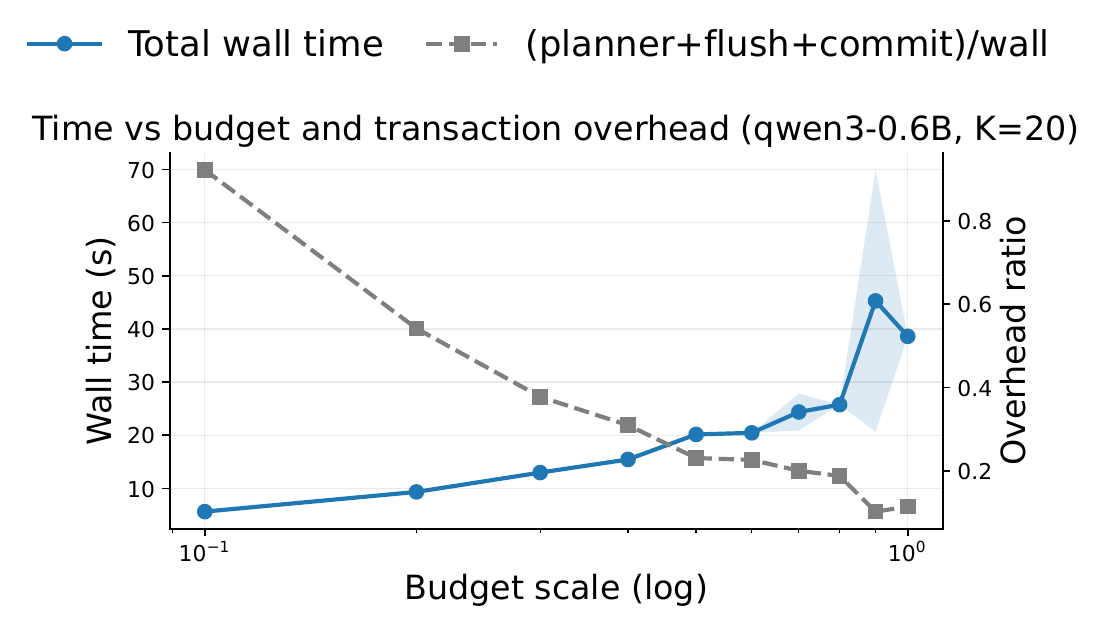}
  \end{minipage}
  \begin{minipage}[t]{0.48\linewidth}
    \centering
    \includegraphics[width=\linewidth]{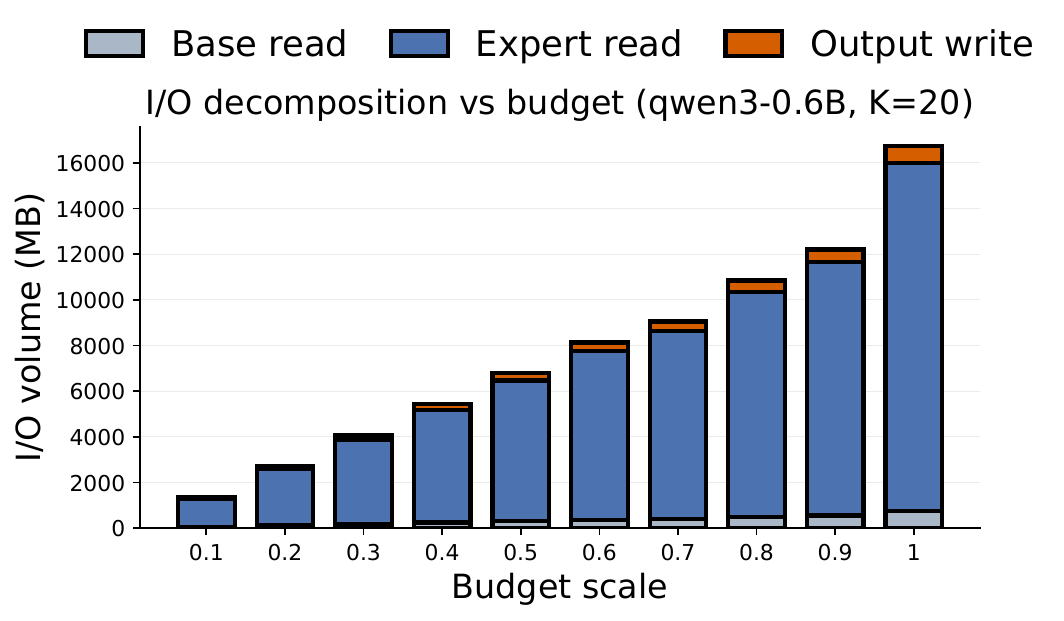}
  \end{minipage}
  \\
  \begin{minipage}[t]{0.95\linewidth}
    \centering
    \includegraphics[width=\linewidth]{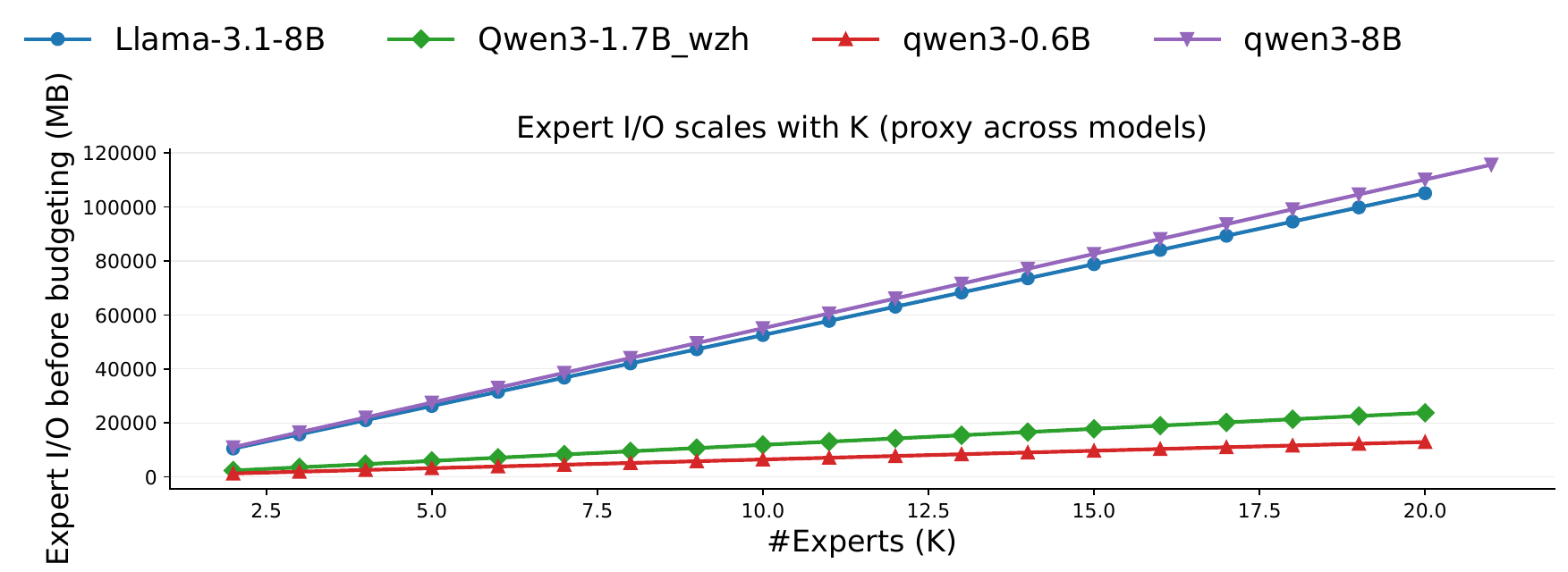}
  \end{minipage}
  \caption{\textbf{Where MergePipe saves time.}
  \textbf{Top-left:} planning, flush, and commit are small relative to execution.
  \textbf{Top-right:} tightening the budget primarily removes expert reads, while base reads and output writes remain nearly fixed.
  \textbf{Bottom:} before budgeting, expert reads scale with the number of experts.}
  \label{fig:io-overhead-appendix}
\end{figure}

\section{Limitations}
\label{app:limitations}

MergePipe targets budgeted weight-space access for checkpoint merging and is complementary to behavior-level fusion methods based on preference optimization, on-policy distillation, or logit-space alignment~\citep{gu2025infifpo,wang2026disagreementlearnabletokenteachability,wang2026infigfusion}.
It assumes experts share a common weight coordinate system and does not address permutation, symmetry, or representation alignment.
Budgeted merging is approximate: only the full-budget setting recovers the standard full-read merge, while lower budgets rely on mask-aware execution.
Its quality is therefore empirical, especially for nonlinear sparse operators such as TIES and DARE, and the benefits are largest when disk-resident expert-read I/O dominates.

\end{document}